\documentclass[letterpaper,12pt]{article}


\usepackage{fullpage}

\usepackage{amssymb}
\usepackage{amsmath}

\usepackage{colortbl}
\definecolor{webgreen}{rgb}{0,0.4,0}
\definecolor{webbrown}{rgb}{0.6,0,0}
\definecolor{purple}{rgb}{0.5,0,0.25}
\definecolor{darkblue}{rgb}{0,0,0.7}
\definecolor{darkred}{rgb}{0.7,0,0}
\usepackage{hyperref}
\hypersetup{colorlinks,citecolor=darkblue,filecolor=black,linkcolor=darkred,urlcolor=webgreen,pdfpagemode=None,
pdfstartview=FitH}

\newcommand{\ignore}[1]{}




\newtheorem{theorem}{{Theorem}}
\newtheorem{definition}{{Definition}}

\newtheorem{claim}{{\sc Claim}}
\newtheorem{example}{{\sc Example}}



\sloppy
\usepackage{cleveref}
\crefname{claim}{claim}{claims}
\crefname{fact}{fact}{facts}
\crefname{algorithm}{algorithm}{algorithms}
\crefname{observation}{observation}{observations}
\crefname{equation}{equation}{equations}
\crefname{assumption}{assumption}{assumptions}

\newenvironment{proof}{\noindent {\em Proof\/}:\enspace}
{\hfill $\blacksquare{}$ \medskip \\}

%

\DeclareMathOperator*{\argmax}{\arg\!\max}

\newcommand{\val}{\text{\bf val}}


\usepackage{pgf}
\usepackage{verbatim}
\usepackage{enumerate}
\usepackage{amssymb}
\usepackage{mathrsfs}
\usepackage{algorithm}
\usepackage[noend]{algorithmic}
\usepackage{mathtools}

\usepackage{resizegather}

\newif\ifverbose
\verbosefalse 
\newcommand{\sn}[1]  {\ifverbose {\noindent \textcolor{blue}{{\bf SN: }{\em #1}} } \else \fi }
\newcommand{\is}[1]  {\ifverbose {\noindent \textcolor{red}{{\bf IS: }{\em #1}} } \else \fi }

\usepackage{url}

\usepackage{xspace}

\usepackage{nicefrac}



\newcommand{\mech}{\text{\bf \texttt{SPARCAS}}}
\newcommand{\mechfull}{{\bf \texttt{SP}}ot {\bf \texttt{A}}uction-based {\bf \texttt{R}}obotic {\bf \texttt{C}}ollision {\bf \texttt{A}}voidance {\bf \texttt{S}}cheme}
\usepackage{authblk}
\usepackage[numbers]{natbib}
\usepackage[font=footnotesize]{subfig}
\usepackage{graphicx}
\usepackage{siunitx}
\usepackage{balance}
\usepackage{libertine}

\sloppy

\title{\Large{\bf \mech: A Decentralized, Truthful Multi-Agent Collision-free Path Finding Mechanism}}

\author[1]{Sankar Das}
\author[1]{Swaprava Nath}
\author[1]{Indranil Saha}

\affil[1]{\small Indian Institute of Technology Kanpur, \texttt{\{sdas,swaprava,isaha\}@iitk.ac.in}}

\date{\today}

\begin{document}
\maketitle

\begin{abstract}

\noindent
 We propose a decentralized  collision-avoidance mechanism for a group of {\em independently controlled} robots moving on a shared workspace. Existing algorithms achieve multi-robot collision avoidance either (a)~in a centralized setting, or (b)~in a decentralized setting with {\em collaborative} robots. We focus on the setting with {\em competitive} robots in a {\em decentralized} environment, where robots may strategically reveal their information to get prioritized.
 We propose the mechanism \mech\ in this setting that, using principles of mechanism design, ensures truthful revelation of the robots' private information and provides locally efficient movement of the robots. It is free from collisions and deadlocks, and handles a dynamic arrival of robots. 
 In practice, this mechanism scales well for a large number of robots where the optimal collision-avoiding path-finding algorithm (M*) does not scale. Yet, \mech\ does not compromise the path optimality too much. Our mechanism prioritizes the robots in the order of their `true' higher needs, but for a higher payment.
 It uses {\em monetary transfers} which is small enough compared to the {\em value} received by the robots.
\end{abstract}

\section{Introduction}
\label{sec:intro}

Collision avoidance is a central problem in various multi-agent path planning applications, and the problem has been provided different solutions in different paradigms \cite[e.g.]{snape2010smooth,chen2017decentralized,DesaiSYQS17}. In this paper, we focus on multiple {\em dynamically arriving} and {\em independently controlled} robots to move from their source to their destinations using a track network~\cite{warehouseRoboticsGuizzo,warehouseRobotics,kivaWarehouseRobots}. 
As many robots share the same track network, they are prone to collide with each other.
To avoid that, {\em three} different approaches are employed in the literature.

In the first approach, offline multi-robot planning algorithms are employed to generate the collision-free paths for all the robots statically~\cite[e.g.]{ErdmannRA86,Berg05,JL13,TurpinMK14,WagnerC11,SahaRKPS14, SahaRKPS16}. 
However, this approach has two severe drawbacks. 
First, the {\em computation time} for generating path plans for a large number of robots may be prohibitively large.
Second, they cannot deal with the {\em dynamic arrival} of new robots to the system without recomputing the whole plan. 

In the second approach, the robots independently generate their trajectories offline without the knowledge about the trajectories of the other robots \cite[e.g.]{Azarm97, Chun99,Jager01,Pallottino04,Olfati-Saber07,Hoffmann08,Purwin08,Velagapudi10,snape2010smooth, Desaraju12, chen2017decentralized}.
Hence, the trajectories of the robots are not collision-free, but are resolved online in a decentralized manner through information exchange among the potentially colliding robots, assuming that the robots will {\em cooperate} with their movements. 
If the simultaneous movements of the robots are not possible,
the robots run a distributed consensus algorithm to find a collision-free plan. 

In the third approach, robots interact in an auction-like setting and bid for their makespan or demand for resource \citep{lagoudakis2005auction,bererton2004auction,nunes2015multi,calliess2011lazy}. Algorithms are designed for certain optimality objectives. However, these approaches do not allow robots to be owned and controlled by independent agents, and therefore do not ensure that these agents {\em bid} their privately observed information (makespan or demand of resource) {\em truthfully} to each other or to the planner. However, the combinatorial auction reduction of the MAPF problem~\cite{amir2015multi} does satisfy truthtelling, but it is centralized and therefore has the same limitations of time complexity and dynamic arrival. The other strand of literature \citep{takei2012time,dhinakaran2017hybrid} on non-cooperative robots consider other robots as dynamic obstacles and solve computationally hard mixed integer programs in a centralized manner.

We note that in many commercial settings the robots are controlled by independent operators, e.g., in autonomous vehicle movements, and the robots can potentially {\em manipulate} their bids for a prioritized scheduling. In this paper, we exploit the full strength of mechanism design to ensure truthtelling along with other desirable properties like collision and deadlock avoidance.

\subsection*{Our Approach and Results}
We propose a decentralized mechanism for a group of robots that move on a shared workspace avoiding collision with each other. A collision situation arises when multiple robots try to simultaneously access 
a common location. The competitive robots in our setting engage in a {\em spot auction} and bid the amount they are willing to pay to get access to those locations. 
In our protocol, the robots that `win' the auction get access to move, but for a payment.
The challenge in designing such a protocol is to choose the winners and their payments such that the robots provide their private {\em valuations} for the access locations {\em truthfully}. 

To this end, we consider a quasi-linear payoff model \cite[Chap 10]{SL08} for the robots and propose \mechfull\ (\mech). We show that \mech\ is {\em decentralized, collision-free, deadlock-free}, and {\em robust against entry-exit} (\Cref{obs:general}). The mechanism ensures that every competitive robot reveals its private information truthfully in this spot auction (\Cref{thm:dsic}), does not have a positive surplus of money, and is {\em locally efficient} (\Cref{thm:bb}). 

We run extensive experiments in \S\ref{sec:experiment} to evaluate the performance of \mech\ in practice. We show that \mech\ (1)~scales well with large number of robots, (2)~takes much less  planning and execution times compared to that of M*~\cite{WagnerC11} and prioritized planning~\cite{Berg05}, (3)~does differentiated prioritization of different classes of robots, and (4)~handles dynamic arrivals smoothly.

We have simulated our mechanism  for up to 500 robots in Python experimental setup and up to 10 TurtleBots~\cite{turtlebot} using ROS~\cite{ROS}. 
Successful simulation in ROS promises that the proposed mechanism can be implemented in a real multi-robot system.



%
%
%

\section{Problem Setup}

Define $[n] \coloneqq \{1, \ldots, n\}$. Let $N = [n]$ be the set of robots that are trying to travel from their sources to destinations in a {\em decentralized} manner over a graph $G = (V,E)$, where $V$ is the vertex set and $E$ is the edge set. 
All edges represent full-duplex paths, i.e., robots can travel on both directions on this edge following usual traffic rules. 
Time is discrete and is denoted by the variable $t$. Every edge in the graph is  partitioned into slots where a robot can stand at any given time step. 
Every slot is uniquely numbered between $1, \ldots, S$, where $S$ is the total number of slots. We represent the $k$-th slot by $x_k$, $k \in [S]$.
We call every vertex having a degree of three or more an {\em intersection} point, since robots moving into such vertices cannot guarantee to avoid a collision without coordinating with each other. In practice, an actual intersection will be a roundabout having a fixed number of slots. Every roundabout can hold robots equal to the number of slots at any given time and the movement of all the robots is unidirectional (e.g., counter-clockwise). An illustration of the paths and the roundabout is given in \Cref{fig:roundabout}. A vertex with degree larger than {\em four} can be equivalently represented using a larger roundabout that can accommodate that many number of full-duplex paths.
\begin{figure}[h!]
 \centering
 \includegraphics[width=0.65\linewidth]{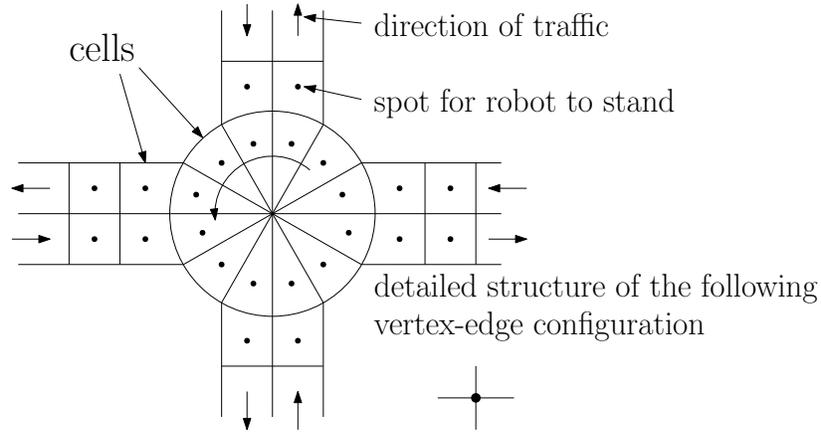}
 \caption{Illustration of a vertex with four incident full-duplex edges.}
 \label{fig:roundabout}
\end{figure}

We assume that these robots are independently controlled (or owned) by agents who have no information about other robots' locations, sources, and destinations. Each robot has a deadline to reach its destination, which contributes to its {\em value} at each time step, e.g., monetary gain from the transfer of a shipment to a customer. A necessary goal of this traversal is to ensure that {\em no collision} occurs between the robots.

In a decentralized traversal plan of the robots, every robot decides its path based on its local information, e.g., those received through its own sensors and shared by other robots in close proximity. Let the path of robot $i \in N$ be denoted by $P_i := (x_{k_1}, x_{k_2}, \ldots, x_{k_{l(i)}})$, where $x_{k_j}$'s are the slots of the graph leading from the source to the destination of $i$. When two different robots reach the same slot at the same time, a collision occurs, which each robot wants to avoid. Collision can occur (a) at a roundabout where two robots intend to move to the same empty cell, or (b) on a cell in the edge where a robot is stationary and another robot moves into the same cell. Let the current location of robot $i$ at $t$ is denoted by $\ell_i(t)$, and the next intended location is $n_i(t)$. 

We assume that a robot's value is calculated internally by the entity that controls it by considering several factors, e.g., its proximity to the deadline, the type of objects it is carrying (high, if it is carrying a priority shipping item), and all such factors are consolidated into a real number $v_i(t)$ for robot $i$ if it can move to its next location $n_i(t)$ at time $t+1$. The value is zero if the robot does not move at $t$.\footnote{We assume this for simplicity of the exposition. However, our algorithm works even when the value is non-zero and has the same time complexity.} 

In this paper, our primary objective is to find a {\em decentralized mechanism} that (a)~uses the current and intended location of the nearby robots and ensures truthful revelation of their valuations, (b)~avoids collision, deadlocks, (c)~maintains fairness by prioritized scheduling based on higher willingness-to-pay. We assume that the robots are capable of communicating with each other within close proximity and can compute their plans of movement. These objectives are formally defined in \Cref{sec:desiderata}.

\section{Decentralized Mechanisms}
\label{sec:decentralized}

We develop decentralized mechanisms that avoid collision dynamically using the principles of mechanism design theory \cite{borgers2015introduction}. We provide two mechanisms and show how the latter improves the performance even though both of them avoid collision. Every decentralized mechanism discussed in this section is preceded by a computation of the shortest path from the source to destination for every robot.\footnote{There are several polynomial-time algorithms to compute such a path \cite[e.g.]{Hart68}, and we therefore exclude this part from our description of the mechanism. We will, however, include the path computation time in the empirical evaluation to make a fair comparison with other collision-avoidance mechanisms.}

In the current setup, exactly two robots can engage in a collision scenario. A first approach to avoid collision is to develop a protocol where each robot can independently calculate a plan and move according to it. The mechanism emerging from such decentralized planning has to ensure that multiple robots do not appear at the same cell simultaneously. 

Since, each robot $i \in N$ has a value $v_i(t)$ at every time step $t$, we want the mechanism to yield a decentralized plan for the robots such that it maximizes the sum of their values. Hence, this mechanism is economically {\em efficient}\footnote{In microeconomic theory, the alternative that maximizes the {\em sum} of the values of all the agents is called `efficient', and it is known that it gives rise to a lot of other desirable properties. For a complete discussion, see \cite{borgers2015introduction}.} \cite[Chap 10]{SL08}. In the rest of the paper, we will use the term `efficient' to denote an allocation that maximizes the sum of the valuations of the agents. 
If the robots find out that they are leading to a collision, they share information of their values, e.g., $v_i(t)$ for robot $i$, and an efficient plan of movement is decided to allow the robot with a higher value to move and the other to wait -- breaking ties arbitrarily. We provide a formal definition of `efficiency' in \Cref{sec:desiderata}.

Our first approach is a na\"ive decentralized mechanism, shown in \Cref{algo:naive} at time $t$ for robot $i$ (the mechanism is repeated at every $t$ for every $i$ until each robot reaches its destination).
 \begin{algorithm}[t!]
 \caption{A na\"ive decentralized collision avoidance mechanism for robot $i$}
  \begin{algorithmic}[1]
   \STATE {\bf Input:} cells $\ell_i(t), n_i(t)$, and value $v_i(t)$ of robot $i$
   \STATE {\bf Output:} a decision for robot $i$ to STOP/GO
      \IF {$n_i(t) = \ell_j(t)$, for some $j \neq i$}
       \STATE STOP at $t+1$
      \ELSE 
      \STATE announce the tuple $(\ell_i(t), n_i(t), v_i(t))$ and receive the same from other nearby robots
      \IF {$n_i(t) = n_j(t)$, for some $j \neq i$}
	\IF {\{$v_j(t) > v_i(t)$\} or \{$v_j(t) = v_i(t)$ and $j > i$\}} \label{mech:auction}
	  \STATE STOP at $t+1$
	\ELSE 
	  \STATE GO at $t+1$
	\ENDIF
      \ELSE
	\STATE GO at $t+1$
      \ENDIF
      \ENDIF
  \end{algorithmic}
 \label{algo:naive}
 \end{algorithm}
 Note that the other robot $j$ that $i$ may collide with, i.e., has $n_i(t) = \ell_j(t)$ or $n_i(t) = n_j(t)$, must be close to robot $i$ and therefore can communicate with $i$ to announce its current and next cell.

 This mechanism is efficient given the value information, i.e., $v_i$'s, of the agents at every $t$ are accurate. Since the robots in our setup are independently controlled, these information are private to the agents. Collision mitigation in such scenarios needs a {\em mechanism} that {\em self-enforces} the agents to reveal it {\em truthfully}. This is important in a setting with competing robots, where a robot can overbid its value to ensure that it is given priority in every collision scenario. The na\"ive mechanism in the current form would fail to ensure efficiency if it uses only the reported values since the strategic robots will overbid to increase their chance of being prioritized.
 
 This is precisely where our approach using the ideas of mechanism design \citep{borgers2015introduction} is useful. 
 In mechanism design theory, it is known that in a private value setup, if the mechanism has no additional tool to penalize overbidding, only degenerate mechanisms, e.g., {\em dictatorship} (where a pre-selected agent's favorite outcome is selected always), are truthful \citep{Gib73,Sat75}. This negative result also holds in the robot collision setting since the Gibbard-Satterthwaite result extends to settings with cardinal values as well. 
 A complementary analysis by \citet[Thm 7.2]{Roberts79} shows that a dictatorship result reappears under certain mild conditions in a {\em quasi-linear} setting (which is our current setting and is formally defined later), unless monetary transfers are allowed. Money, in these settings, is used as a means of transferring value from an agent to another, and it is shown to help by unraveling the true values of the agents. In this paper, we, therefore, use monetary transfers among the agents to ensure truthful value revelation at every round. 
 
 \smallskip \noindent
 {\em Robot payoff model}: At every time step $t$, given the current position of the robots, denote the set of feasible next-time step configurations by $A(t+1)$. Hence, for every feasible configuration $a \in A(t+1)$, robot $i$'s valuation is given by $\val_i(a, v_i(t)) = v_i(t)$ if robot $i$ is allowed to move to $n_i(t)$ under $a$, and zero otherwise. The mechanism also recommends robot $i$ to pay $p_i(t)$ amount of money -- which can be negative too, meaning the robot {\em is paid}. The net payoff of robot $i$ is given by the widely used {\em quasi-linear} formula \cite[Chap 10]{SL08}
 \begin{equation}
  \label{eqn:payoff-model}
  \val_i(a, v_i(t)) - p_i(t).
 \end{equation}
 A strategic robot reports its private information $v_i(t)$ to maximize its net payoff. We assume that the robots are {\em myopic}, i.e., consider maximizing their immediate payoff. A mechanism in such a setting is defined as follows.
 \begin{definition}[Mechanism]
  A mechanism $(\mathbf{f} \coloneqq [f_t]_{t = 1,2,\ldots},\mathbf{p} \coloneqq [p_{i,t}]_{i \in N, t = 1,2,\ldots})$ is a tuple which decides the {\em allocation} and {\em payment} for every robot at every time step $t$. Here the allocation function at time $t$ is given by $f_t: \mathbb{R}^n \mapsto A(t+1)$ that decides the next location of every robot $i \in N$, and $p_{i,t} : \mathbb{R}^n \mapsto \mathbb{R}$ denotes the payment made by the robot. Hence, $f_t(v_1(t),\ldots,v_n(t))$ and $p_{i,t}(v_1(t),\ldots,v_n(t))$ denote the allocation and payment of robot $i$ respectively at time step $t$.
 \end{definition}
 
 We assume that the robots agree to a mechanism $(\mathbf{f},\mathbf{p})$ and compute them locally and follow the movement given by $f$ and pay according to $p$ to a central authority. In this paper, we consider allocation and payment functions to be stationary, i.e., independent of $t$.
 
 Under this setup, the na\"ive decentralized mechanism (\Cref{algo:naive}) can be made truthful by adding the following {\em payment rule}. The robot that is prioritized pays the amount equal to the bid of the robot that stops at that time. This is the well-known {\em second price auction} \cite{Vick61}, which is known to be truthful, and we run this at every instant and for every potentially colliding pair of robots.

 We notice that in \Cref{algo:naive}, the robots do not simultaneously move to the same cell and therefore successfully avoid collision in a decentralized manner. However, this mechanism can lead to the following {\em deadlock} scenario. Suppose the intersection is of four full-duplex paths with four cells at the intersection (\Cref{fig:deadlock}). If there  are two robots at the intersection and two more are attempting to enter, and both the entering robots wins the auction step (Step~\ref{mech:auction}) in \Cref{algo:naive}, it will lead to the four cells at the intersection occupied with four robots. If these robots' next locations are the current locations of the robots already in the intersection, none of them can move under this mechanism (see \Cref{fig:deadlock}).
\begin{figure}[h!]
 \centering
 \includegraphics[width=0.50\linewidth]{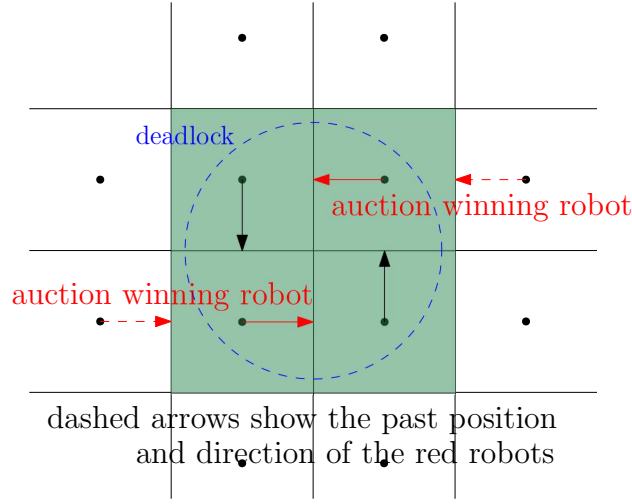}
 \caption{Example of a deadlock scenario. Arrows denote their intended direction of move.}
 \label{fig:deadlock}
\end{figure}

To avoid such a deadlock, a mechanism cannot allow more than $(m-1)$ robots to enter an intersection of capacity $m$. Our proposed decentralized mechanism \mech\ (\mechfull) considers the quasi-linear payoff model, takes account of the known facts of mechanism design, and modifies the na\"ive mechanism to avoid such a deadlock along with other desirable properties. 
For a given graph $G$, let the set of indices of the intersection vertices be represented by $K$. Denote the cells of intersection vertex $k \in K$ by $I_k(V)$. \mech\ modifies how the robots enter an intersection. At every time step $t$, for intersection $k$, it finds the set of robots $N_k(t)$ that are either inside the intersection or are attempting to enter the intersection. From the information shared by all the robots $(\ell_i(t), n_i(t), v_i(t)), i \in N_k(t)$, the mechanism allows every robot $i \in N_k(t)$ to find the feasible next-step configurations $A_k(t+1)$ at that intersection. The feasible configurations ensure that there are no more than $(m-1)$ robots in the intersection of capacity $m$. The set of feasible next-step configurations $A(t+1)$ is an union of the individual feasible configurations $A_k(t+1)$ at each intersection $k \in K$. 
At the non-intersection cells, every robot advances to a cell in its shortest path unless that is already occupied by robot -- we keep these configurations out of the allocations (e.g., $A_k(t)$'s) since we do not need a collective decision there.
The proposed mechanism picks the configuration
\begin{equation}
\label{eqn:efficient}
 a_k \in \argmax_{a \in A_k(t+1)} \sum_{i \in N_k(t)} \val_i(a, v_i(t)).
\end{equation}
The configuration maximizes the {\em social welfare} (sum of the values of all the agents) of all the robots that are either inside or are entering the intersection. Denote this configuration by $a_k^*(t+1)$. Similarly, we can define a {\em welfare maximizing} configuration excluding robot $i$ as follows.
\begin{equation}
 \label{eqn:externality}
 a_k^{N_k(t) \setminus \{i\}} \in \argmax_{b \in A_k^{N_k(t) \setminus \{i\}}(t+1)} \sum_{j \in N_k(t) \setminus \{i\}} \val_j(b, v_j(t)).
\end{equation}
This choice maximizes the social welfare of all the robots except $i$. Denote this configuration by $a_k^{*,N_k(t) \setminus \{i\}}(t+1)$. Define the following expression for payment
\begin{align}
 p_i(t) &:= \sum_{j \in N_k(t) \setminus \{i\}} \val_j(a_k^{*,N_k(t) \setminus \{i\}}(t+1), v_j(t)) \nonumber \\ & \hspace{2cm} - \sum_{j \in N_k(t) \setminus \{i\}}  \val_j(a_k^*(t+1), v_j(t)) \label{eqn:payment}
\end{align}
We are now ready to present our proposed mechanism \mech.
For agent $i$ at time $t$, it is described in \Cref{algo:collision}. As before, the mechanism is repeated at every $t$ for every $i$ until each robot reaches its destination.
 \begin{algorithm}[t!]
 \caption{\mech\ for robot $i$ at time $t$}
 \begin{algorithmic}[1]
   \STATE {\bf Input:} cells $\ell_i(t), n_i(t)$, and value $v_i(t)$ of robot $i$
   \STATE {\bf Output:} a decision for robot $i$ to STOP/GO
    \IF {$\ell_i(t), n_i(t) \notin I_l(V), \ \forall l \in K$} 
      \IF {$n_i(t) = \ell_j(t)$, for some $j \neq i$}
       \STATE STOP at $t+1$
      \ELSE 
       \STATE GO at $t+1$
      \ENDIF
	\ELSE 
	  \STATE announce the tuple $(\ell_i(t), n_i(t), \hat{v}_i(t))$  (reported value $\hat{v}_i(t)$ {\em can} be different from the {\em true} value $v_i(t)$) and receive the same from other nearby robots
	  \STATE consider announced tuples of other agents who are in/heading to the same intersection: $(\ell_j(t), n_j(t), \hat{v}_j(t)), \ j \in N_k(t) \setminus \{i\}$, where $k$ is s.t. $n_i(t) \in I_k(V)$
      \STATE compute $a_k^*(t+1)$ (\Cref{eqn:efficient}) and STOP/GO according to that recommendation in time $t+1$ \label{alg:computation-step}
      \STATE pay $p_i(t)$ (\Cref{eqn:payment}) to a trusted authority (e.g., warehouse manager)
	\ENDIF
  \end{algorithmic}
 \label{algo:collision}
 \end{algorithm}

\Cref{algo:collision} describes the mechanism from the individual agents' point of view. The consolidated payment collected by the trusted authority at intersection $k$ at $t$ from all the robots at that intersection is distributed equally to the robots that were not part of intersection $k$ at $t$. Therefore, \mech\ does not accumulate money from the agents.

\smallskip \noindent
{\em Locally centralized via intersection manager}: \mech\ does not need any synchronization among the robots except that they all follow a common clock (which is also a standard assumption in robot collision avoidance literature \cite[e.g.]{WagnerC11,WAGNER20151,Berg05}). Yet, in \Cref{sec:theory}, we show that it satisfies many desirable properties. However, there is some computational redundancy in \mech. Step~\ref{alg:computation-step} of \Cref{algo:collision} is computed by every robot involved in the mechanism near an intersection point, and this makes the mechanism completely decentralized. A presence of an intersection manager (a trusted intermediary) at $I_k(V)$ who can collect the reported $(\ell_i(t), n_i(t), \hat{v}_i(t)), \forall i \in N_k(t)$ and compute it once and inform all the robots in $N_k(t)$ could substantially reduce the cost of computation of every robot. Though this will come at a slight compromise in the decentralized feature of \mech, we want to make the reader informed about both the versions of the mechanism.

In the following section, we define certain desirable properties for decentralized robot collision avoidance schemes and show that \mech\ satisfies all of them. 

\section{Design Desiderata}
\label{sec:desiderata}

In a decentralized robot path planning, the agents choose their own route from the source to the destination. The collision avoidance mechanism ensures a protocol that is applied locally at a potential colliding scenario. The property desirable in such a setting is of {\em locally efficient} prioritization.
\begin{definition}[Local Efficiency]
 \label{def:efficiency}
 A robotic collision avoidance mechanism $(\mathbf{f},\mathbf{p})$ is {\em locally efficient} if for every time $t$, every intersection $k \in K$, it chooses an allocation that maximizes the sum value of all the robots. Formally, it picks $\forall t, \forall k \in K$
 \[f(v_i(t), i \in N_k(t)) \in \argmax_{a \in A_k(t+1)} \sum_{i \in N_k(t)} \val_i(a, v_i(t)).\]
\end{definition}

However, in a multi-agent setting, $v_i(t)$'s of the robots are unknown to the mechanism, which can only access the reported values $\hat{v}_i(t)$'s. Therefore, the following property ensures that the robots are incentivized to `truthfully' reveal these information.
\begin{definition}[Dominant Strategy Truthfulness]
 \label{def:non-manipulability}
 A mechanism $(\mathbf{f},\mathbf{p})$ is {\em truthful in dominant strategies} if for every $t$, $v_i(t), \hat{v}_i(t), \hat{v}_{-i}(t)$\footnote{We use the subscript ${-i}$ to denote all the agents except agent $i$, therefore, $v_{-i}:=(v_1, \ldots, v_{i-1},v_{i+1}, \ldots, v_n)$.}, and $i \in N$
  \begin{align*}
   \lefteqn{ \val_i(f(v_i(t), \hat{v}_{-i}(t)), v_i(t)) - p_i(v_i(t), \hat{v}_{-i}(t))} \\
   \quad &\geqslant \val_i(f(\hat{v}_i(t), \hat{v}_{-i}(t)), v_i(t)) - p_i(\hat{v}_i(t), \hat{v}_{-i}(t)).
  \end{align*}
\end{definition}
The inequality above shows that if the true value of robot $i$ is $v_i(t)$, the allocation and payment resulting from reporting it `truthfully' maximizes its payoff irrespective of the reports of the other robots.

Since we consider mechanisms with monetary transfer, an important question is whether it generates a surplus amount of money. The following property ensures that there is neither surplus nor deficit.
\begin{definition}[Budget Balance]
 A mechanism $(\mathbf{f},\mathbf{p})$ is {\em budget balanced} if for every $t$ and $(v_i(t),v_{-i}(t))$,  $\sum_{i \in N} p_i(v_i(t),v_{-i}(t)) = 0$.
\end{definition}

 In the context of decentralized robot path planning, a mechanism that is robust against robot failures is highly desirable. Also, it is desirable if a robot can start its journey when other robots are already in motion, and the mechanism does not need other robots to re-compute their path plan. 
\begin{definition}[Entry-Exit Robustness]
 A mechanism $(\mathbf{f},\mathbf{p})$ is {\em robust against entry or exit} of the robots if the properties satisfied by the other robots' path plans are unaffected by an addition or deletion of a robot under the mechanism.
\end{definition}
 Centralized collision avoidance mechanisms that compute the paths and recommends that to all the robots are not robust against entry or exit. With every addition or deletion of a robot, the plan has to be recomputed. It is to be noted that decentralization alone cannot guarantee entry-exit robustness. Decentralization only implies that the decisions are taken independently by each of the robots. It does not restrict the way in which they interact with each other. Based on the interaction, the plan may not be robust against entry-exit, as the following example shows.
\begin{example}[Decentralized but not Entry-Exit Robust]
 Consider the following decentralized version of the prioritized planning algorithm. Before beginning to move, the robots send their identities on a common channel (assume that there is a wired broadcast channel connecting all the starting positions), but this common channel is not available when they are on the move. The highest priority robot computes its path and broadcasts it (the priority order is fixed beforehand and is a common knowledge of all the robots). After listening to that plan, the second highest priority robot plans its path considering the former as a dynamic obstacle and broadcast, and the process continues for robots of the next priorities. After all robots compute their plans in this {\em decentralized} manner, they leave their parking slots and follow their pre-decided (but decentralized) plan of movement. This scheme is clearly not robust against entry-exit. A newly joined robot does not have the earlier broadcast messages and hence cannot plan its path. Also if that robot has a higher priority than some of the robots that are moving already, then those lesser priority robots cannot change their path plan.
\end{example}

 In the following section, we show that our proposed mechanism satisfies all these properties. 
 In \Cref{sec:experiment}, we consider a real warehouse setting and exhibit the performance of \mech\ in practice.

\section{Theoretical guarantees}
\label{sec:theory}

%

The property of truthfulness is important in the multi-agent setting since it ensures that the allocation decision is taken on the {\em true} values of $v_i(t)$'s and the actual locally efficient allocations were done.
\begin{theorem}
\label{thm:dsic}
 \mech\ is dominant strategy truthful.
\end{theorem}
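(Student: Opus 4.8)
The plan is to recognize that the payment in \Cref{eqn:payment} is exactly the Clarke-pivot instantiation of the Vickrey-Clarke-Groves payment, paired with the efficient allocation rule of \Cref{eqn:efficient}, so that dominant-strategy truthfulness follows from the standard Groves argument, specialized to the per-intersection, per-time-step structure here. Since \mech\ is stationary and the robots are myopic, it suffices to fix a single intersection $k$ and time step $t$, an arbitrary robot $i \in N_k(t)$ with true value $v_i(t)$, and an arbitrary (not necessarily truthful) profile of reports $\hat{v}_{-i}(t)$ by the others, and to show that reporting $\hat{v}_i(t) = v_i(t)$ maximizes $i$'s net payoff, matching \Cref{def:non-manipulability}.

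First I would write $i$'s net payoff under an arbitrary report $\hat{v}_i(t)$. Writing $a^* := a_k^*(t+1)$ for the allocation picked in \Cref{eqn:efficient} (which depends on $i$'s report) and substituting \Cref{eqn:payment} with the others' reported values, the payoff $\val_i(a^*, v_i(t)) - p_i(t)$ telescopes into
\begin{align*}
\Big( \val_i(a^*, v_i(t)) + \sum_{j \in N_k(t) \setminus \{i\}} \val_j(a^*, \hat{v}_j(t)) \Big) - \sum_{j \in N_k(t) \setminus \{i\}} \val_j\big(a_k^{*, N_k(t) \setminus \{i\}}(t+1), \hat{v}_j(t)\big).
\end{align*}
The subtracted term is the optimal welfare of the others over the feasible set that excludes $i$ (\Cref{eqn:externality}); it is a function of $\hat{v}_{-i}(t)$ alone and hence a constant from $i$'s point of view. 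So $i$ effectively wants to maximize the first parenthesized term, the total reported welfare evaluated at the chosen allocation with $i$'s \emph{true} value plugged in for itself.

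The key step is then to note that the allocation rule selects $a^*$ precisely to maximize $\sum_{j \in N_k(t)} \val_j(a, \hat{v}_j(t))$ over the feasible set $A_k(t+1)$. Consequently, reporting $\hat{v}_i(t) = v_i(t)$ causes the mechanism to maximize over $A_k(t+1)$ exactly the quantity $i$ cares about, namely $\val_i(a, v_i(t)) + \sum_{j \neq i} \val_j(a, \hat{v}_j(t))$; any other report can only lead the mechanism to a feasible allocation that does not maximize this sum, so truthful reporting weakly dominates. Chaining this inequality across arbitrary $\hat{v}_{-i}(t)$ and over all intersections and time steps, together with myopia, yields the full claim for \mech.

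I expect the main obstacle to be careful bookkeeping about feasible sets rather than any conceptual hurdle: $a^*$ ranges over $A_k(t+1)$ while the benchmark $a_k^{*, N_k(t) \setminus \{i\}}(t+1)$ ranges over the potentially different set $A_k^{N_k(t) \setminus \{i\}}(t+1)$, so I must verify that the subtracted term genuinely does not depend on $\hat{v}_i(t)$ and that the truthful-versus-misreport comparison is made over the common set $A_k(t+1)$. As a sanity check, because the valuation is binary ($\val_i(a, v_i(t)) \in \{0, v_i(t)\}$) this is a single-parameter domain, so the same conclusion would follow from monotonicity of the allocation in $\hat{v}_i(t)$ and a critical-value payment a la Myerson; but the Groves route above is cleaner since \Cref{eqn:payment} is already in VCG form.
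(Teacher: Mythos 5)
Your proposal is correct and follows essentially the same route as the paper's proof: both are the standard Groves/VCG argument that rewrites robot $i$'s quasi-linear payoff as the total reported welfare at the chosen allocation minus a term (the optimal welfare of $N_k(t)\setminus\{i\}$) that is independent of $i$'s report, and then invokes the optimality of $a_k^*$ in \Cref{eqn:efficient}. Your added care about the feasible-set bookkeeping and the single-parameter/Myerson remark are reasonable asides but do not change the argument.
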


\begin{proof}
 This proof is a standard exercise in the line of the proof for Vickery-Clarke-Groves (VCG) mechanism \cite{Vick61,Clar71,Grov73}. \mech\ follows the VCG allocation and payment locally at every intersection calculated by the robots independently. Hence, the payoff of robot $i$ at intersection $k$ is given by (for brevity of notation, we hide the time argument in every function and write $a_k^*$ as $a_k^*(v_i, v_{-i})$)
\begin{gather}
 \begin{align*}
   \lefteqn{ \val_i(a_k^*(v_i, \hat{v}_{-i}), v_i) - p_i(v_i, \hat{v}_{-i})} \\
   &= \sum_{j \in N_k}  \val_j(a_k^*(v_i, \hat{v}_{-i}), \hat{v}_j) - \sum_{j \in N_k \setminus \{i\}} \val_j(a_k^{*,N_k \setminus \{i\}}, \hat{v}_j) \\
   &\geqslant \sum_{j \in N_k}  \val_j(a_k^*(\hat{v}_i, \hat{v}_{-i}), \hat{v}_j) - \sum_{j \in N_k \setminus \{i\}} \val_j(a_k^{*,N_k \setminus \{i\}}, \hat{v}_j) \\
   &= \val_i(a_k^*(\hat{v}_i, \hat{v}_{-i}), v_i) - p_i(\hat{v}_i, \hat{v}_{-i}).
  \end{align*}
\end{gather}
The first equality is obtained by writing and reorganizing the expression for $p_i$. The inequality holds since by definition $\sum_{j \in N_k}  \val_j(a_k^*(v_i, \hat{v}_{-i}), \hat{v}_j) \geqslant \sum_{j \in N_k}  \val_j(a_k, \hat{v}_j)$ for every $a_k$; in particular, we chose $a_k^*(\hat{v}_i, \hat{v}_{-i})$. The last equality is obtained by reorganizing the expressions again.
\end{proof}

\mech\ redistributes the generated money from a particular intersection $k$ to the robots who are not part of it at that time step (see the paragraph following \Cref{algo:collision}). Clearly, this does not affect the truthfulness properties. The generated surplus before redistributing can be shown to be always non-negative.\footnote{The intuition for this claim is that absence of a robot reduces congestion, which leads to the other robots being allowed to move. This increases the sum of the values of the other robots. Therefore, we skip a formal proof of this fact.\label{foot:payment-positive}} The allocation of \mech, given by \Cref{eqn:efficient}, maximizes the social welfare at every intersection. Therefore, it is locally efficient. Hence we get the following theorem.
\begin{theorem}
\label{thm:bb}
 \mech\ is budget balanced and locally efficient.
\end{theorem}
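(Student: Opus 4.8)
The plan is to prove the two claims of \Cref{thm:bb} separately, since they concern different aspects of the mechanism.

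\textbf{Local efficiency.} This is essentially immediate from the construction. By \Cref{def:efficiency}, local efficiency requires that at every time step $t$ and every intersection $k \in K$, the mechanism selects an allocation maximizing $\sum_{i \in N_k(t)} \val_i(a, v_i(t))$ over $a \in A_k(t+1)$. But the allocation rule of \mech\ is defined in \Cref{eqn:efficient} to pick exactly $a_k^*(t+1) \in \argmax_{a \in A_k(t+1)} \sum_{i \in N_k(t)} \val_i(a, v_i(t))$. The only gap is that the definition uses the \emph{true} values $v_i(t)$ whereas the mechanism operates on \emph{reported} values $\hat v_i(t)$; here I would invoke \Cref{thm:dsic}, which guarantees that under dominant-strategy truthfulness every robot reports $\hat v_i(t) = v_i(t)$, so the argmax computed by the mechanism coincides with the true welfare-maximizing allocation. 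Hence local efficiency holds at equilibrium.

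\textbf{Budget balance.} The subtlety is that budget balance as stated in \Cref{thm:bb} and the budget-balance \emph{definition} require $\sum_{i \in N} p_i = 0$, summed over \emph{all} robots, whereas the VCG-style payments in \Cref{eqn:payment} are defined only for robots at a given intersection and are in general \emph{not} self-cancelling within that intersection. The resolution is the redistribution step described in the paragraph following \Cref{algo:collision}: the trusted authority collects the total payment $\sum_{i \in N_k(t)} p_i(t)$ at each intersection $k$ and redistributes it equally among the robots not at $k$ at time $t$. So the plan is to write the net transfer of each robot as its VCG payment (if it is at some intersection) minus its share of the redistributed pot, and then sum over all robots. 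The collected amount at each $k$ exactly equals the amount paid back out, so the two contributions cancel in the global sum, giving $\sum_{i \in N} p_i(t) = 0$ at every $t$. I would note that this redistribution is well-defined precisely because the surplus at each intersection is non-negative (the claim in \Cref{foot:payment-positive}), so the authority never needs to inject outside money, and that by the remark preceding the theorem the redistribution does not disturb truthfulness since each robot's redistributed share is independent of its own report.

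\textbf{Main obstacle.} The genuinely delicate point is reconciling the two \emph{different} payment objects in play: the intersection-local payment $p_i(t)$ of \Cref{eqn:payment}, which makes the per-intersection mechanism a VCG mechanism (and is what drives \Cref{thm:dsic}), versus the \emph{net} payment after redistribution, which is what budget balance quantifies over $N$. I would be careful to state explicitly that ``$p_i$'' in the budget-balance definition refers to the net transfer including the redistributed rebate, and verify that writing the global sum as $\sum_k \bigl( \sum_{i \in N_k(t)} p_i(t) - (\text{redistributed out of } k) \bigr)$ telescopes to zero. The remaining steps — that each robot belongs to at most one intersection's $N_k(t)$ at a given time, and that the equal split is well-defined — are routine and I would not grind through them.
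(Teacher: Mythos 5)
Your proposal is correct and follows essentially the same route as the paper, which argues local efficiency directly from the welfare-maximizing allocation rule in \Cref{eqn:efficient} and budget balance from the redistribution of each intersection's collected payments to the robots outside that intersection. Your version is more careful than the paper's on two points it leaves implicit --- invoking \Cref{thm:dsic} to equate reported and true values in the efficiency argument, and distinguishing the intersection-local VCG payment from the net transfer after redistribution --- but these are elaborations of the same argument, not a different one.
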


\mech\ satisfies certain properties by construction. The following claim summarizes them and we explain it below.
\begin{claim}
\label{obs:general}
 \mech\ is decentralized, collision-free, deadlock-free, and robust against entry or exit.
\end{claim}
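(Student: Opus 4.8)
The plan is to establish the four asserted properties one at a time, each as a consequence of how \mech\ is constructed in \Cref{algo:collision}. Three of them (decentralization, collision-freeness, entry-exit robustness) should follow almost directly from the definitions, so the bulk of the effort — and the main obstacle — will be deadlock-freeness. For decentralization, I would observe that the only global object the mechanism relies on is a shared clock: at each time step every decision a robot makes is a function solely of the tuples $(\ell_j(t), n_j(t), \hat{v}_j(t))$ broadcast by the robots $j \in N_k(t)$ that share its intersection $k$ (or, off an intersection, merely of whether its intended cell $n_i(t)$ is occupied). Since $N_k(t)$ consists only of robots in or entering $k$, all of which are within communication range, each robot can compute both $a_k^*(t+1)$ and its payment $p_i(t)$ from locally available information with no central coordinator, which is exactly decentralization.

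For collision-freeness I would split along the two collision types identified in the setup. On non-intersection cells, the rule to STOP whenever $n_i(t) = \ell_j(t)$ for some $j \neq i$ rules out a robot entering a cell occupied by a stationary robot, and the full-duplex lane convention separates the two travel directions so that two robots never contend for the same edge cell while moving the same way. At an intersection $k$, the mechanism always selects a configuration from $A_k(t+1)$, and by definition $A_k(t+1)$ contains only configurations in which distinct robots are assigned distinct cells; hence the chosen $a_k^*(t+1)$ never sends two robots to the same slot. Combining the two cases yields collision-freeness.

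The hard part will be deadlock-freeness. Here I would lean on the structural constraint built into $A_k(t+1)$: every feasible configuration leaves at least one empty cell in an intersection of capacity $m$, i.e. admits at most $m-1$ occupants. I would argue that, given at least one vacant cell together with the unidirectional (counter-clockwise) rotation rule on the roundabout, there is always some robot whose intended next cell is free — namely the one immediately ``behind'' an empty cell in the direction of rotation — so at every step at least one occupant can legally advance. This precludes the cyclic-wait configuration of \Cref{fig:deadlock}, in which all $m$ cells are occupied and each occupant's intended cell is held by another occupant. The delicate step is turning ``some robot can always move'' into a genuine no-global-stall guarantee: I would show that the reserved empty cell acts as a release valve preventing any closed cycle of mutual blocking from forming, so the system never reaches a state in which no robot can move; I would be careful here to claim deadlock-freeness (absence of a global stall) rather than full starvation-freeness, which would require an additional liveness argument about relative priorities under the efficient allocation $a_k^*(t+1)$.

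Finally, entry-exit robustness should follow from the stationary, memoryless character of the mechanism. Because $f$ and $p$ are taken independent of $t$ and depend only on the current local set $N_k(t)$, no robot stores or relies on a history of earlier broadcasts; a newly arriving robot simply joins the interactions $N_k(\cdot)$ at the intersections it reaches, and a departing robot simply drops out of them. Consequently no other robot needs to recompute a global plan, and the properties verified above continue to hold verbatim for the remaining robots — in sharp contrast to the broadcast-once prioritized scheme of the earlier example, where coupling a robot's plan to one-time global information is exactly what breaks robustness. Assembling these four arguments completes the proof of \Cref{obs:general}.
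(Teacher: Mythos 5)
Your proposal is correct and follows essentially the same route as the paper's own (much terser) justification: decentralization from each robot locally computing the identical allocation from broadcast tuples, collision- and deadlock-freeness from the fact that every configuration in $A_k(t+1)$ assigns distinct cells and reserves one empty block in the roundabout, and entry-exit robustness from the mechanism depending only on the current local sets $N_k(t)$. You simply flesh out the details (notably the ``robot behind the empty cell can always advance'' step for deadlock-freeness) that the paper leaves implicit.
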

\is{Change the order of decentralized and collision-free?} \sn{done}

In \mech, each robot near an intersection point does message passing, compute the allocation and then move synchronously according to it. Each of the robots at an intersection computes the same allocation which keeps one block in the intersection empty (see the definition of the allocation set $A_k$). Thus the robots avoid collision and deadlock, and the mechanism is completely decentralized. 

Note that \mech\ depends only on the values reported by the robots that are already in an intersection or are about to enter it. 
A newly entered robot can at most take part in such an interaction, but cannot change the way other robots in other intersections interact with each other. Hence, the properties that those robots were satisfying before the entry of this robot continue to be satisfied. Hence we get the claim.

In the following section, we investigate the performance of \mech\ in real-world scenarios.



\section{Experiments}
\label{sec:experiment}

While \mech\ satisfies several desirable properties of a collision avoidance mechanism, its scalability, time complexity to find a collision-avoiding path, and differentiated treatment with different classes of robots are not theoretically captured in the previous sections. This is why an experimental study is called for. Note that the mechanisms that fall in the third set of approaches in \S\ref{sec:intro} either only use the bidding part of the auctions and not the payment which is essential to guarantee truthfulness in an auction~\citep{lagoudakis2005auction,bererton2004auction,nunes2015multi,calliess2011lazy} or is a reduced combinatorial auction of the centralized algorithm~\cite{amir2015multi}. Either way they are incomparable with \mech, which is truthful and decentralized. We compare \mech with two widely used protocols for multi-agent path finding -- (a) M*~\cite{WagnerC11}: optimal, but has a significant time complexity, and (b) prioritized planning~\cite{Berg05}: suboptimal, but has low time complexity.

To evaluate \mech\ experimentally, we use a 2-D rectangular workspace representing a road network.
An example of such a workspace of size $16\times 16$ is shown in \Cref{fig:fig-workspace}. 
A robot picks and delivers an object from and to a cell in the service area. 
Each robot follows the traffic rules in the road network, and moves along with the directions of the arrows from the source to the destination. 
It can change its direction only within an intersection cell, but its options are restricted by the available traffic directions in that particular intersection cell. 
\begin{figure}[!ht]
\begin{minipage}{0.49\linewidth}
 \centering
 \includegraphics[width=0.99\linewidth]{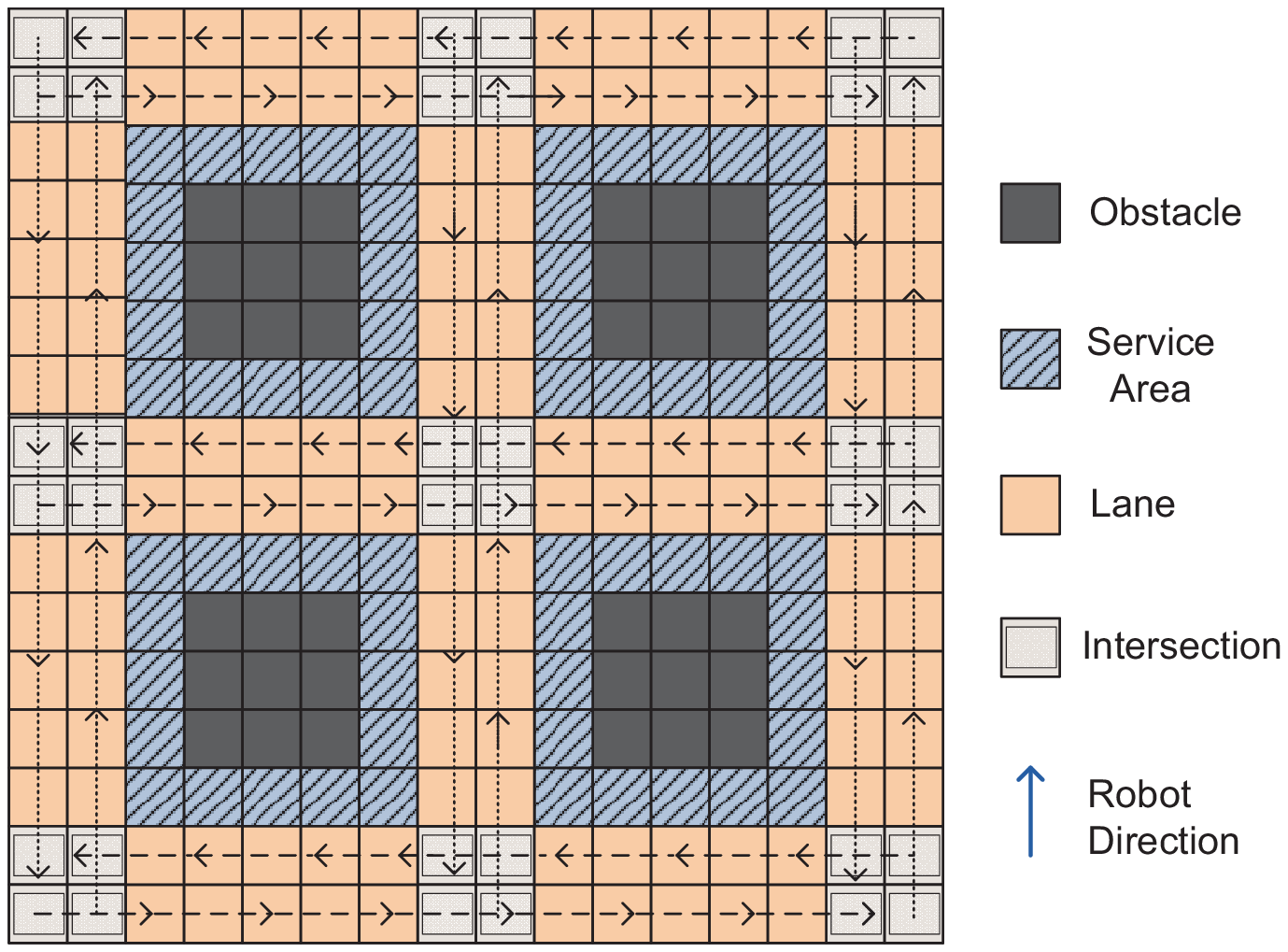}
 \caption{Illustration of a $16\times16$ workspace.}
 \label{fig:fig-workspace}
\end{minipage}
\hspace{0mm}
\begin{minipage}{0.49\linewidth}
\centering
 \includegraphics[width=0.99\linewidth]{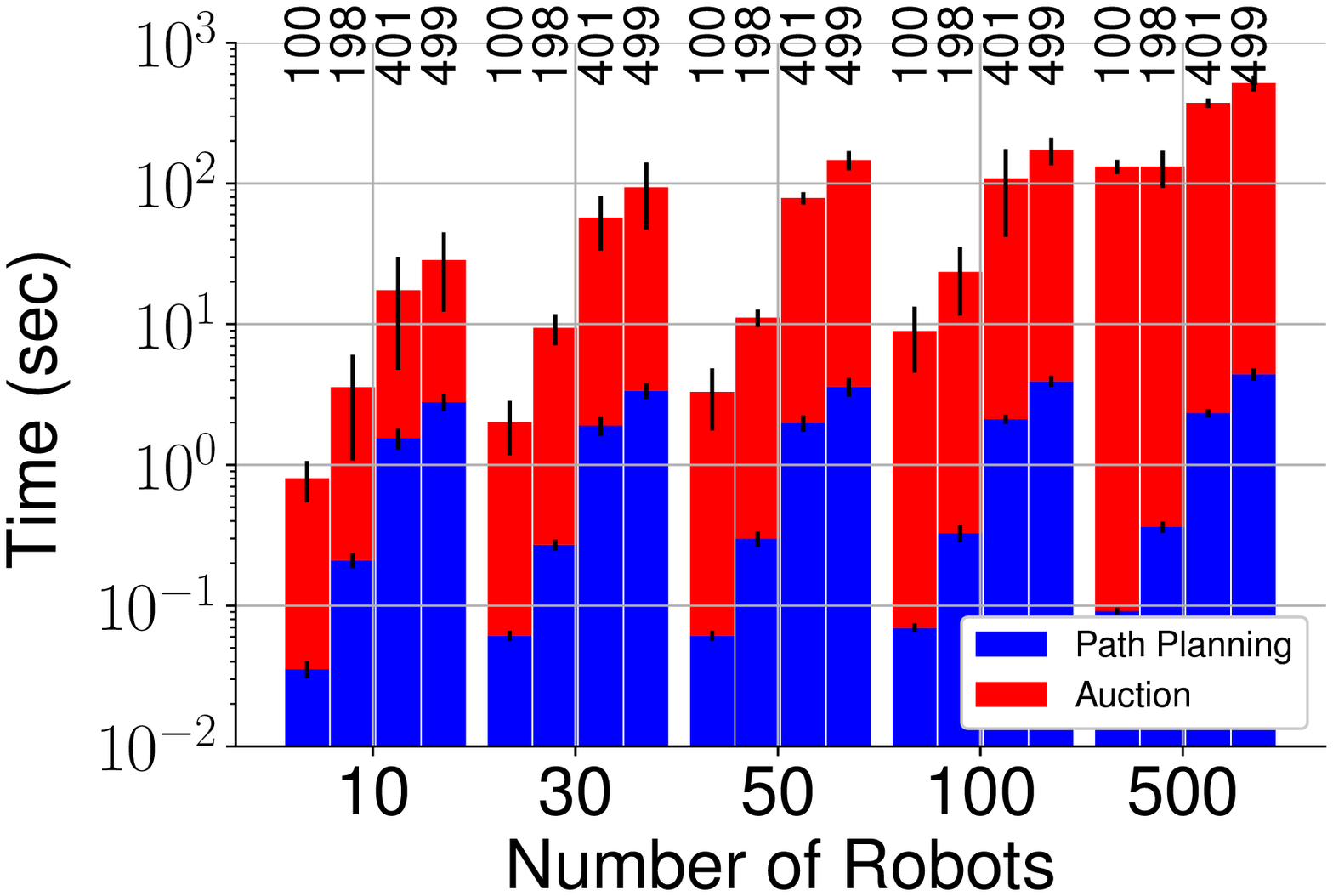}
 \caption{Path planning and auction times of \mech. The numbers on the bars show the workspace width for each number of robots.}
 \label{fig:scalability}
\end{minipage}
\end{figure}


%
%

We have implemented \mech\ in \textsf{Python}.\footnote{All codes are available at \href{https://bit.ly/2lx0fHk}{\bf https://bit.ly/2lx0fHk}}
The simulations have been performed in a 64-bit Ubuntu 14.04 LTS machine with Intel(R) Core TM i7-4770 CPU @3.40 GHz $\times$ 20 processors and 128 GB RAM. Each run for a specific number of robots and the workspace size is performed 20 times to calculate the average of the result. The source and goal locations of the robots are selected independently and uniformly at random from the cells of the service area.

A robot is generated uniformly at random from one of the three classes: {\em economy, regular, premium} having weights 
$0.02$, $0.065$, and $0.2$
 respectively.\footnote{Weights are increasing with a rough multiplying factor of $3.25$.} At every time instant of the experiments, the {\em valuation} of a robot entering an intersection is assumed to be $(t_\text{wait} + 1) \times w$, where $t_\text{wait}$ is the wait time of the robot till that instant and $w$ is the weight as described above.

\subsection{Scalability} \label{sec:scalability}
We evaluated the scalability of \mech\ on workspaces of four different sizes: 
$100\times100$, $198\times198$, $401\times401$  and $499\times499$,\footnote{These numbers ensure a regular pattern of the workspace of \Cref{fig:fig-workspace}.} and for different number of robots between 
$10$ and $500$. \Cref{fig:scalability} shows how the computation time of \mech\ varies with the number of robots
and the size of the workspace.
The computation time of \mech\ has two components: (a) time for an offline path computation and (b) the time required to run the spot auctions (given by \Cref{algo:collision}) online. 
In our experiments, we assumed that the robots compute their paths to reach the destination using A* algorithm~\cite{Hart68}. As this computation can take place in parallel, we count the {\em maximum} of the plan computation times for all the robots as the offline computation time (part (a)).
We assume that each plan execution slot is preceded by a \emph{mini-slot} when the auction can take place at any intersection. We take the product of the duration of the mini-slot dedicated for auction and the maximum number of steps to finish the plan execution for any robot (the makespan) to find the total time spent in spot auction (part (b)).
The duration for the mini-slot has been decided based on extensive simulation with different reasonably-sized workspaces and robot populations.


\subsection{Comparison with static multi-robot planning algorithms} \label{sec:comparison}
We compare the performance of \mech\ with that of M*~\cite{WagnerC11} and prioritized planning~\cite{Berg05}, two state-of-the-art multi-robot path planning algorithms. 
The original version of M* ensures the optimality of the generated multi-robot plan in terms of the total cost (sum of the time for all the robots to reach destination). 
However, a variant of M* called \emph{inflated} M* can produce a sub-optimal plan faster than the original M*.
We compare \mech\ with inflated M* too. 
The prioritized planning algorithm also does not provide any optimality guarantee, but can generate collision-free paths much more time-efficiently than both versions of M*.

We compare \mech, M*, inflated M*, and prioritized planning for a $100\times100$ workspace and different number of robots upto 500 robots.
We have set a timeout of $1200\si{s}$. If the computation does not finish before the timeout, we consider the timeout duration as the computation time of the  algorithms (which is a lower bound).
The experimental results are shown in \Cref{fig:comparison-with-m*-sync}. 
The lines connect through the average values of the planning times, while the actual planning times for $20$ runs for every number of robots are scattered according to their values in the figure. The results show that \mech \ outperforms all the other three algorithms in terms of computation time.  M* and inflated M* algorithms do not scale beyond 75 robots for this timeout.

\begin{figure}[!ht]
\begin{minipage}{0.49\linewidth}
 \centering
 \includegraphics[width=0.99\linewidth]{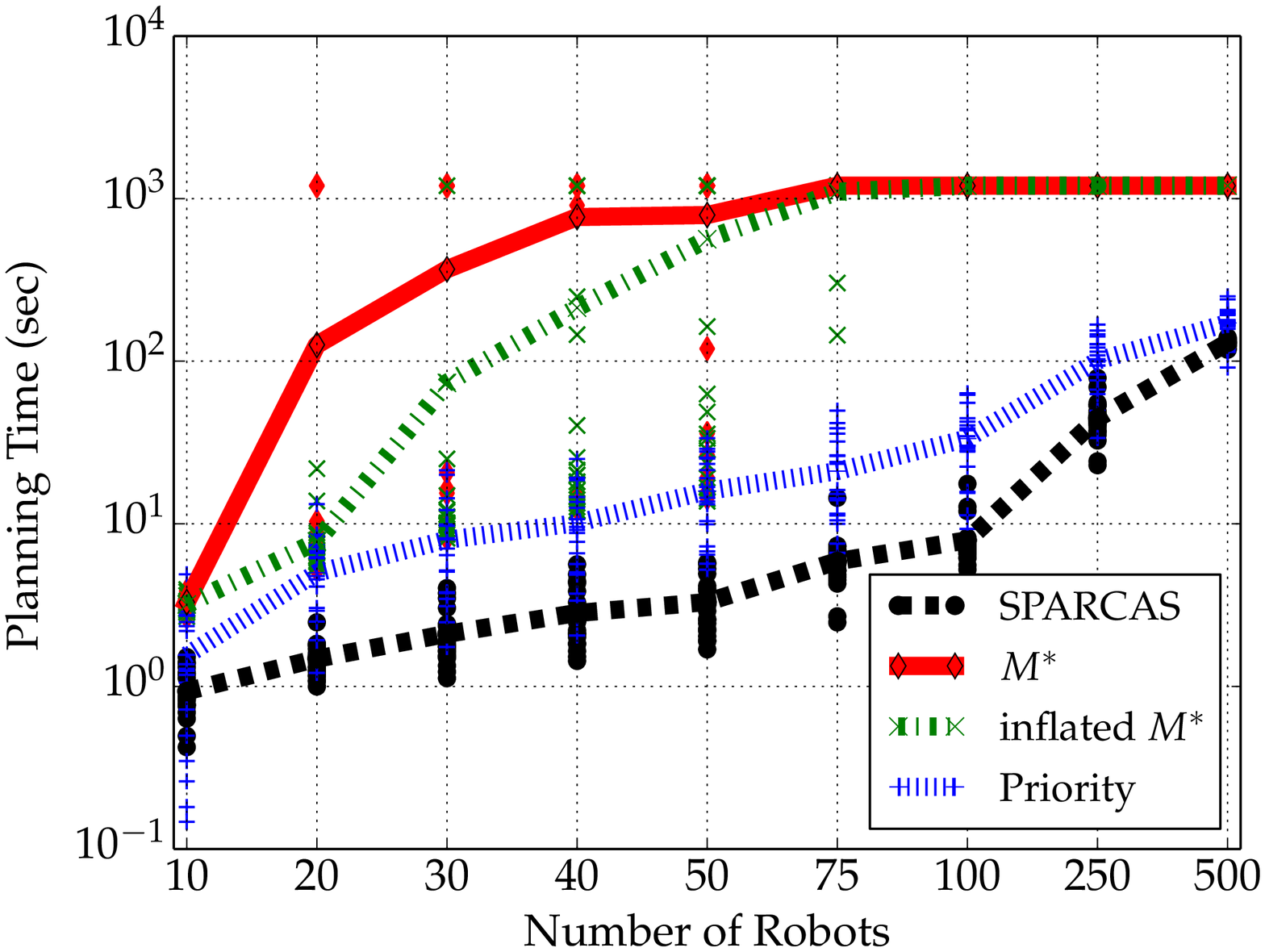}
 \caption{The y-axis shows the total planning time (in sec) of the different algorithms and the x-axis shows the number of robots. For \mech, the planning time is the sum of the offline path computation and online spot auction times. Workspace size $100 \times 100$.}
\label{fig:comparison-with-m*-sync}
\end{minipage}
\hspace{0mm}
\begin{minipage}{0.49\linewidth}
\centering
 \includegraphics[width=0.99\linewidth]{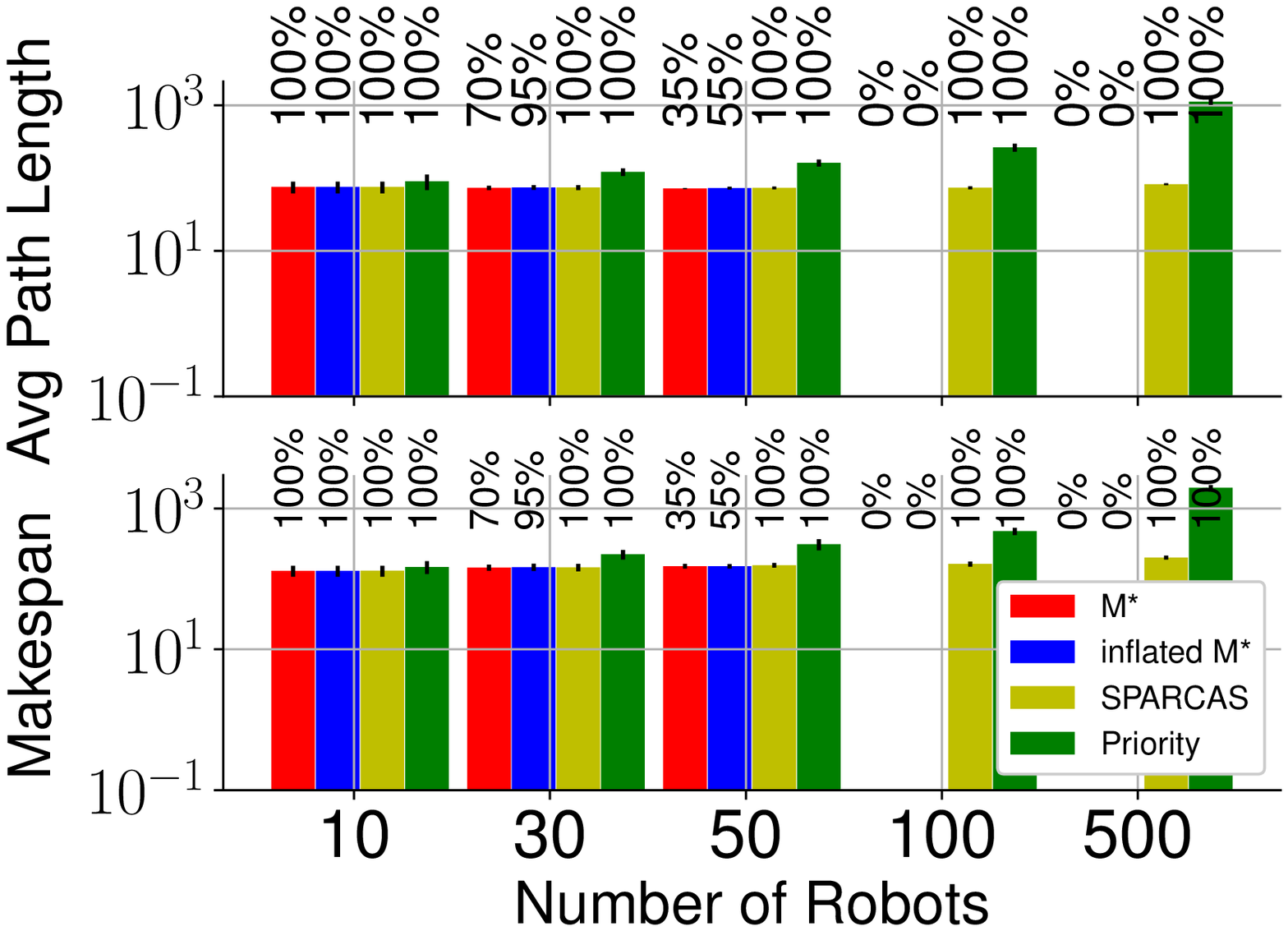}
 \caption{Average path length and makespan for different mechanisms. The numbers on the bar denote the percentage of cases where the mechanism could find a path within the timeout period.\sn{labels are almost invisible}}
 \label{fig:path_length}
\end{minipage}
\end{figure}


We also compare the makespan (maximum time among all robots to reach destination) and average path execution time (the average of the individual path lengths) for the mechanisms for a workspace of size $100 \times 100$ in \Cref{fig:path_length}. The method of the plot is identical to \Cref{fig:comparison-with-m*-sync}. The numbers on top of the bars show the percentage of successful completion (not hitting timeout). The results show that there is not much difference in the execution part of these mechanisms and that \mech\ provides a path which is very close in length to that of the optimal path generated by M*.

\subsection{Delays of different priority classes} \label{sec:delay}
We compare how long robots of different classes (economy, regular, and premium) take to reach their destinations. The first and second subplots of \Cref{fig:wait_time} show the average waiting times and payments respectively of the different classes of robots for a workspace size of $100\times100$. 
We see that for a reasonable congestion, \mech\ prioritizes the higher classes for a greater payment.
%
\begin{figure}[h!]
\centering
 \includegraphics[width=0.80\linewidth]{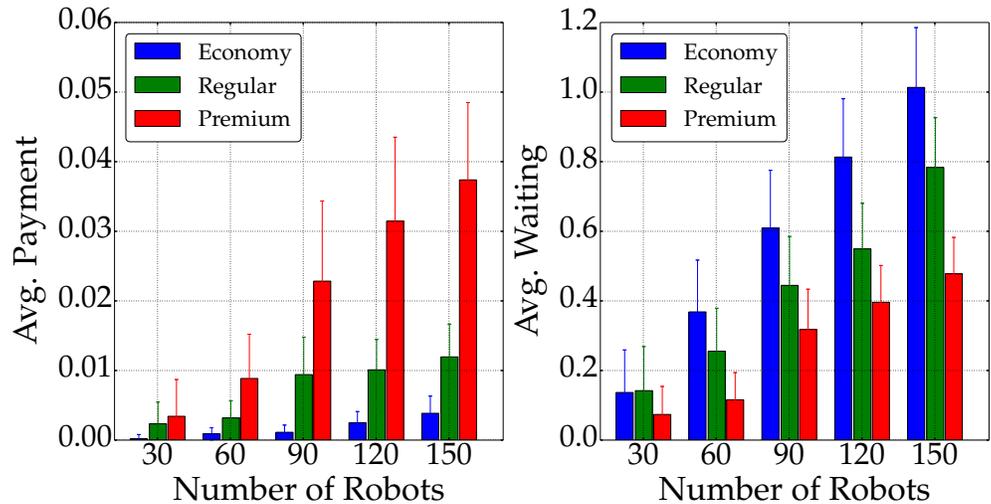}
 \caption{Average waiting time (in sec) and payments under \mech\ for different classes of robots.}
 \label{fig:wait_time}
\end{figure}

\subsection{Experiments with ROS}
\label{sec:ros}
We have simulated \mech\ for up to 10 TurtleBots~\cite{turtlebot} on ROS~\cite{ROS}. 
We have used a workspace of size $14\times14$ (similar to \Cref{fig:fig-workspace}) with each grid cell of length $1\si{m}$. 
The linear and the angular velocity of the TurtleBots have been chosen in such a way that each motion primitive takes $6.5\si{s}$
for execution. 
The maximum time for running an auction (decides the length of a mini-slot) observed in all our experiments is about $60\si{\milli\second}$.
The video of our experiments is submitted as a supplementary  material.

We also ran experiments to find out (a)~the payments under \mech, and (b)~the capability to handle dynamic arrival of robots. The payments are sufficiently small and the dynamic arrivals are gracefully handled in \mech. The details of these experiments are in the supplemental material.

\subsection{Capability of handling dynamic robot arrival} \label{sec:dynamic}

In this section, we study the robustness of \mech\ against dynamic robot arrivals.
The setup remains similar to the previous subsection -- we mention the differences as follows.
We partition the total number of robots into two groups of equal size for this evaluation. 
The first group of robots arrive at the beginning. The rest $50\%$ of the robots arrive independently and uniformly at random within the time interval of zero and the length of the workspace.
In \mech, a newly arrived robot computes its own path by using the A* algorithm \cite{Hart68}, and starts to follow that path immediately. However, in M*, a newly arrived robot requests path to the centralized M* path planner. The M* planner collects such requests and waits for re-planning until the number of the newly arrived robots exceeds a predefined threshold. In our experiments, this threshold is set to $2$. During the re-planning, the M* planner considers the current locations of the previously arrived robots as their source locations and excludes the robots which already have reached their respective goal locations. For prioritized planning, the robots that arrive later are considered to have a lower priority than the robots arrived already, and compute their path considering the previous robots' positions as dynamic obstacles. \Cref{fig:comparison-with-m*-async} shows the results of the experiments for two different workspace sizes: $100 \times 100$ and $198 \times 198$. 
%
\begin{figure}[!ht]
\begin{center}
\subfloat[Workspace size $100 \times 100$]
{\includegraphics[width=0.49\linewidth]{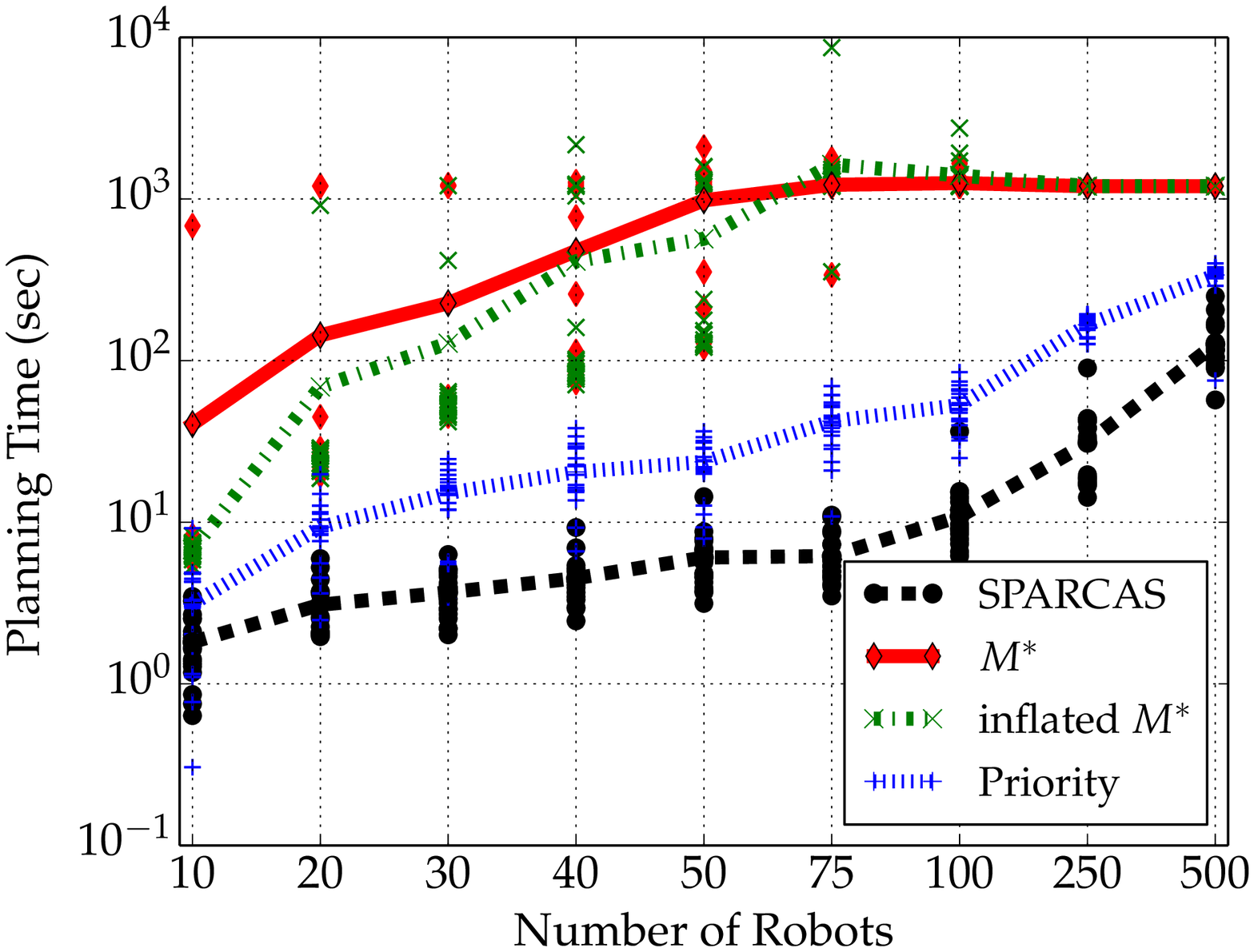}} \hspace*{0em} 
\subfloat[Workspace size $198 \times 198$]
{\includegraphics[width=0.49\linewidth]{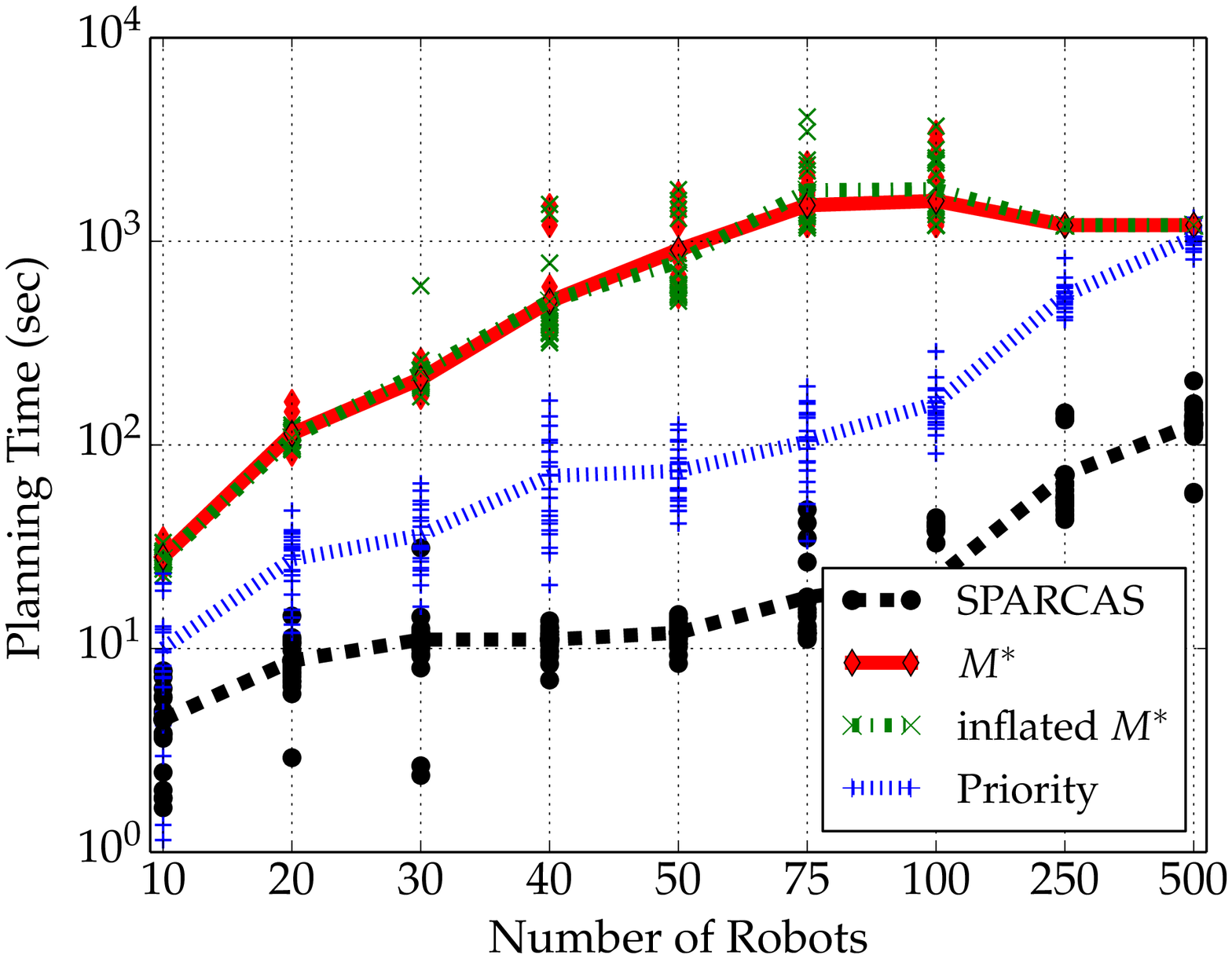}} 
\caption{The y-axis shows the total planning time (in sec) of the algorithms and the x-axis shows the number of robots. For \mech, the planning time is the sum of the offline path computation and online spot auction times.}
\label{fig:comparison-with-m*-async}
\end{center}
\end{figure}  

\subsection{Payments under \mech} \label{sec:payment}
\mech\ is different from the other collision avoiding mechanisms in its use of payments which ensure truthful participation of the competitive robots. Therefore, it is a natural question to find out how large the payments are in comparison with the valuations received by the robots. In our experiment with a workspace size of $100 \times 100$, 95.12\% of the robots were never needed to pay. For the number of robots: $10, 20, 30, 40, 50$ and $75$, the tuple of average valuation and average payment of the robots were $(1.9, 7.5\times10^{-4})$, $(2.16, 23.75\times10^{-4})$, $(2.2, 47.5\times10^{-4})$, $(2.33, 47.5\times10^{-4})$, $(2.41, 63\times10^{-4})$, and $(2.56, 134.7\times10^{-4})$ respectively. Similarly, another important question is whether the robots encounter a negative payoff, i.e., payment more than valuation. In the experiment, we find that the answer to this is negative. Hence the robots have incentives to voluntarily participate in this mechanism. We also find, quite expectedly as was argued in \Cref{foot:payment-positive}, that the payments by the robots are always non-negative.

\if 0
\subsection{Experiments with ROS}
\label{sec:ros}
We have simulated \mech\ for up to 10 TurtleBots~\cite{turtlebot} on ROS~\cite{ROS}. 
We have used a workspace of size $14\times14$ (similar to \Cref{fig:fig-workspace}) with each grid cell of length $1\si{m}$. 
The linear and the angular velocity of the TurtleBots have been chosen in such a way that each motion primitive takes $6.5\si{s}$
for execution. 
The maximum time for running an auction (decides the length of a mini-slot) observed in all our experiments is about $60\si{\milli\second}$. 
\is{We have not introduced mini-slot before.} \sn{right, because I commented that part in the scalability section -- which is now uncommented. But we must be overflowing the page limit by now. Need to see where to reduce text.}
The video of our experiments is submitted as a supplementary  material.

We also ran experiments to find out (a)~the payments under \mech, and (b)~the capability to handle dynamic arrival of robots. The payments are sufficiently small and the dynamic arrivals are gracefully handled in \mech. The details of these experiments are in the supplemental material.

\fi

\if 0
\sn{pending items:
\begin{itemize}
 \item average path length of the robots under M* and \mech.
 \item relative largeness of the values to payments -- for every robot, the total value / total payment. If the payment is zero, count number of such robots, and do the ratio for the other robots.
 \item the value - payment and report that it's non-negative everywhere.
 \item report that payment is always positive.
\end{itemize}
}
\fi

\section{Discussions}
We presented a scalable decentralized collision avoidance mechanism for a {\em competitive} multi-robot system using ideas of mechanism design. We prove that it is truthful, efficient, deadlock-free, and budget-balanced. We exhibit experimentally that it is scalable to hundreds of robots, and can handle dynamic arrival without compromising the path-length optimality too much. In future, we would like to extend the algorithm to multi-intersection `traffic-jam's and conduct experiments on a real multi-robot system.


\end{document}